\definecolor{darkblue}{rgb}{0,0.08,0.45}
\renewcommand{\cite}{\citep}
\newcommand{\ubold}{\fontseries{b}\selectfont}
\newtheoremstyle{mydefinition}
  {5.5pt} 
  {0pt} 
  {} 
  {} 
  {\bfseries} 
  {.} 
  {.5em} 
  {} 
\newtheoremstyle{mytheorem}
  {5.5pt} 
  {0pt} 
  {\itshape} 
  {} 
  {\bfseries} 
  {.} 
  {.5em} 
  {} 
\theoremstyle{mydefinition}
\theoremstyle{mytheorem}
\newtheorem{theorem}{Theorem}
\newcounter{enum2}
\renewenvironment{enumerate}{%
\begin{list}%
 {\arabic{enum2}.\ \,}{%
 \usecounter{enum2}
 \setlength{\itemindent}{0pt}
 \setlength{\leftmargin}{12pt}
 \setlength{\rightmargin}{0pt}
 \setlength{\labelsep}{0pt}
 \setlength{\labelwidth}{20pt}
 \setlength{\itemsep}{0pt}
 \setlength{\parsep}{0pt}
 \setlength{\listparindent}{0pt}
 \setlength{\topsep}{0pt}
 }}{\end{list}}
\newcommand{\R}{\mathbb{R}}
\newcommand{\Cc}{\mathcal{C}}
\newcommand{\Tc}{\mathcal{T}}
\newcommand{\Pc}{\mathcal{P}}
\newcommand{\KL}{D_{\mathrm{KL}}}
\newcommand{\rank}{\pi}
\renewcommand{\phi}{\varphi}
\renewcommand{\vec}[1]{\boldsymbol{#1}}
\newcommand{\argmin}{\mathop{\mathrm{argmin}}}
\newcommand{\ip}{\perp\!\!\!\perp}
\newcommand{\Bon}{\mathrm{Bon}}
\newcommand{\Tar}{\mathrm{Tar}}
\newcommand{\med}{\mathop{\mathrm{med}}}
\title{Finding Statistically Significant Interactions\\ between Continuous Features}
\author{
  Mahito Sugiyama \\
  National Institute of Informatics\\
  JST PRESTO\\
  \texttt{mahito@nii.ac.jp} \\
  \And
  Karsten Borgwardt\\
  D-BSSE, ETH Z{\"u}rich\\
  SIB Swiss Institute of Bioinformatics\\
  \texttt{karsten.borgwardt@bsse.ethz.ch}
}
\begin{document}
\maketitle

\begin{abstract}
The search for higher-order feature interactions that are statistically significantly associated with a class variable is of high relevance in fields such as Genetics or Healthcare, but the combinatorial explosion of the candidate space makes this problem extremely challenging in terms of computational efficiency and proper correction for multiple testing. While recent progress has been made regarding this challenge for binary features, we here present the {\it first solution for continuous features}. We propose an algorithm which overcomes the combinatorial explosion of the search space of higher-order interactions by deriving a lower bound on the $p$-value for each interaction, which enables us to massively prune interactions that can never reach significance and to thereby gain more statistical power.
In our experiments, our approach efficiently detects all significant interactions in a variety of synthetic and real-world datasets.
\end{abstract}

\section{Introduction}
A big challenge in high-dimensional data analysis is the search for features that are statistically significantly associated with the class variable, while accounting for the inherent multiple testing problem. This problem is relevant in a broad range of applications including natural language processing, statistical genetics, and healthcare.
To date, this problem of \emph{feature selection}~\cite{Guyon03} has been extensively studied in statistics and machine learning, including the recent advances in selective inference~\cite{Taylor15}, a technique that can assess the statistical significance of features selected by linear models such as the Lasso~\cite{Lee2016}.

However, current approaches have a crucial limitation: They can find only \emph{single features or linear interactions of features}, but it is still an open problem to find \emph{patterns}, that is, \emph{multiplicative} (\emph{potentially higher-order}) \emph{interactions between features}.
A relevant line of research towards this goal is \emph{significant} (\emph{discriminative}) \emph{pattern mining}~\cite{Terada13,Llinares2015KDD,Papaxanthos16,Pellegrina18}, which tries to find statistically associated feature interactions while controlling the \emph{family-wise error rate} (\emph{FWER}), the probability to detect one or more false positive patterns.
However, all existing methods for significant pattern mining only apply to combinations of \emph{binary or discrete} features, and none of methods can handle real-valued data, although such data is common in many applications.
If we binarize data beforehand to use existing significant pattern mining approaches, a binarization-based method may not be able to distinguish (un)correlated features (see Figure~\ref{figure:example}).

To date, there is no method that can find all higher-order interactions of \textit{continuous} features that are significantly associated with an output variable and that accounts for the inherent multiple testing problem.
To solve this problem,
one has to address the following three challenges:
\begin{enumerate}
 \item\label{prob:pval} How to assess the significance for a multiplicative interaction of continuous features?
 \item\label{prob:mcp} How to perform multiple testing correction? In particular, how to control the FWER (family-wise error rate), the probability to detect one or more false positives?
 \item\label{prob:enum} How to manage the combinatorial explosion of the candidate space, where the number of possible interactions is $2^d$ for $d$ features?
\end{enumerate}
The second problem and the third problem are related with each other: If one can reduce the number of combinations by pruning unnecessary ones, one can gain statistical power and reduce false negative combinations while at the same time controlling the FWER.
Although there is an extensive body of work in statistics on multiple testing correction including the FDR~\cite{Hochberg88,Benjamini95} and also several studies on approaches to interaction detection~\cite{Bogdan15,Su16}, none of these approaches addresses the problem of dealing with the combinatorial explosion of the $2^d$-dimensional search space when trying to finding higher-order significant multiplicative interactions.

\emph{Our goal in this paper is to present the first method, called C-Tarone, that can find all higher-order interactions between continuous features that are statistically significantly associated with the class variable, while controlling the FWER.}

Our approach is to use the \emph{rank order statistics} to directly estimate the probability of joint occurrence of each feature combination from continuous data, which is known as \emph{copula support}~\cite{Tatti13}, and apply a \emph{likelihood ratio test} to assess the significance of association between feature interactions and the class label, which solves the problem~\ref{prob:pval}.
We present the tight lower bound on $p$-values of association, which enables us to prune unnecessary interactions that can never be significant through the notion of \emph{testability} proposed by~\citet{Tarone90}, and can solve both problems~\ref{prob:mcp} and~\ref{prob:enum}.

This paper is organized as follows:
We introduce our method C-Tarone in Section~\ref{sec:method}. We introduce a likelihood ratio test as a statistical association test for interactions in Section~\ref{subsec:test}, analyze multiple testing correction using the testability in Section~\ref{subsec:mcp}, and present an algorithm in Section~\ref{subsec:alg}.
We experimentally validate our method in Section~\ref{sec:exp} and summarize our findings in Section~\ref{sec:conclusion}.

\begin{figure}[t]
 \centering
 \includegraphics[width=.8\linewidth]{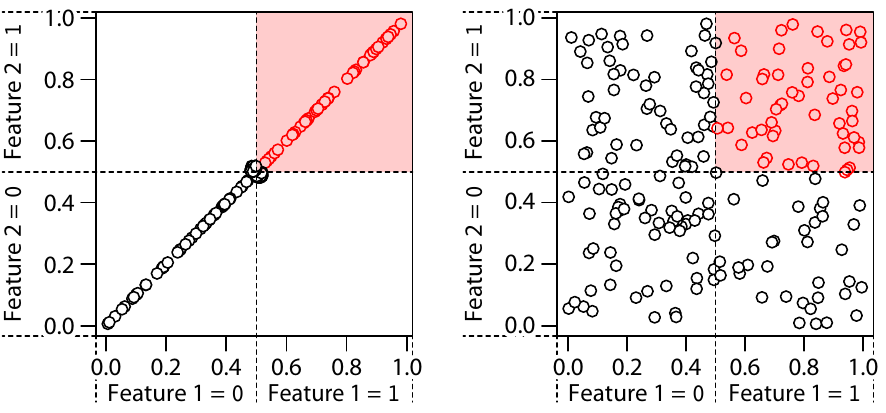}
 \caption{Although there is a clear correlation between Features 1 and 2 in the left panel and none in the right panel, after median-based binarization of Feature 1 and Feature 2, the estimated probability of the occurrence of the feature combination (the number of points for which Feature $1 =$ Feature $2 = 1$, or the number of points in the red box) will be exactly the same in both examples.
 Hence if the left case is a significant interaction, the right uncorrelated case also becomes a significant interaction in binarization-based methods.}
 \label{figure:example}
\end{figure}

\section{The Proposed Method: C-Tarone}\label{sec:method}
Given a supervised dataset $D = \{\,(\vec{v}_1, y_1), (\vec{v}_2, y_2)$, $\dots, (\vec{v}_N, y_N)\,\}$, where each data point is a pair of an $d$-dimensional vector $\vec{v}_i = (v_{i}^1, v_{i}^2, \dots, v_{i}^d) \in \R^d$ and a binary class label $y_i \in \{0, 1\}$.
We denote the set of features by $V = \{1, 2, \dots, d\}$ and the power set of features by $2^V$.
For each feature $j \in \{1, 2, \dots, d\}$, we write $\vec{v}^j = (v_1^j, v_2^j, \dots, v_N^j)$, which is the $N$-dimensional vector composed of the $j$th feature of the dataset $D$.

Our goal is to find \emph{every multiplicative feature interaction that is associated with class labels}.
To tackle the problem, first we measure the joint occurrence probability $\eta(J) \in \R$ of a feature combination $J \in 2^V$ in a dataset.
For each combination $J \in 2^V$, the size $|J|$ corresponds to the order of an interaction, and we find arbitrary-order interactions in the form of combinations.
If data is not real-valued but binary, that is, $v_i^j \in \{0, 1\}$, this problem is easily solved by measuring the \emph{support} used in frequent itemset mining~\cite{Agrawal94,Aggarwal14FPM} .
Each feature combination $J \in 2^V$ is called an \emph{itemset}, and the support of $J$ is defined as $\eta(J) = (1 / N) \sum_{i = 1}^N \prod_{j \in J} v_i^j$, which corresponds to joint probability of $J$.
The support is always from $0$ to $1$, and we can introduce a binary random variable $X_J$ corresponding to the joint occurrence of $J$, where $X_J = 1$ if features in $J$ jointly occurs and $X_J = 0$ otherwise, $\eta(J)$ corresponds to the empirical estimate of the probability $\Pr(X_J = 1)$.

This approach can be generalized to continuous (real-valued) data using the \emph{copula support}, which is the prominent result given by~\citet{Tatti13} in the context of frequent pattern mining from continuous data.
The copula support allows us to define the \emph{binary} random variable $X_J$ of joint occurrence of $J$ and estimate $\eta(J) = \Pr(X_J = 1)$ from \emph{continuous} data.
The key idea is to use the \emph{normalized ranking} for each feature $j \in V$, which is defined as $\rank(v_i^j) = (k - 1) / (N - 1)$ if $v_i^j$ is the $k$th smallest value among $N$ values $v_1^j$, $v_2^j$, $\dots$, $v_N^j$.
Then we convert a dataset $D$ with respect to features in a combination $J \in 2^V$ into a single $N$-dimensional vector $\vec{x}_{J} = (x_{J1}, x_{J2}, \dots, x_{JN}) \in [0, 1]^N$ by
\begin{align}
 \label{eq:rankvector}
 x_{Ji} = \prod\nolimits_{j \in J} \rank(v_i^j)
\end{align}
for each data point $i \in \{1, 2, \dots, N\}$.
\citet{Tatti13} showed that the empirical estimate of the probability $\Pr(X_J = 1)$ of the joint occurrence of a combination $J$ is obtained by the \emph{copula support} defined as
\begin{align}
 \label{eq:copula}
 \eta(J) = \frac{1}{N} \sum_{i = 1}^N x_{Ji} = \frac{1}{N} \sum_{i = 1}^N \prod_{j \in J} \rank(v_i^j).
\end{align}
Intuitively, joint occurrence of $J$ means that rankings among features in $J$ match with each other.
The definition in Equation~\eqref{eq:copula} is analogue to the support for binary data, where the only difference here is $\rank(v_i^j)$ instead of binary value $v_i^j$.

The copula support always satisfies $\eta(J) \le 0.5$ by definition and has the \emph{monotonicity} with respect to the inclusion relationship of combinations:
$\eta(J) \ge \eta(J \cup \{j\})$
for all $J \in 2^V$ and $j \in V \setminus J$.
Note that, although \citet{Tatti13} considered a statistical test for the copula support, it cannot be used in our setting due to the following two reasons:
(1) his setting is unsupervised while ours is supervised; (2) multiple testing correction was not considered for the test and the FWER was not controlled.

\subsection{Statistical Testing}\label{subsec:test}
Now we can formulate our problem of finding feature interactions, or itemset mining on continuous features, that are associated with class labels as follows:
Let $Y$ be an output binary variable of which class labels are realizations.
The task is to find \emph{all} feature combinations $J \in 2^V$ such that the null hypothesis $X_J \ip Y$, that is, $X_J$ and $Y$ are statistically independent, is rejected by a statistical association test while rigorously controlling the FWER, the probability of detecting one or more false positive associations, under a predetermined significance level $\alpha$.

As a statistical test, we propose to use a \emph{likelihood ratio test}~\cite{Fisher22}, which is a generalized $\chi^2$-test and often called G-test~\cite{Woolf57}.
Although Fisher's exact test has been used as the standard statistical test in recent studies~\cite{Llinares2015KDD,SugiyamaSDM15,Terada13}, it can be applied to only discrete test statistics and cannot be used in our setting.

\begin{table}[t]
 \centering
 \caption{Contingency tables.}
 \label{table:contingency}
 \begin{subtable}[b]{.49\linewidth}
  \centering
  \caption{Expected distribution.}\label{table:contingency:expected}
  \begin{tabular}{cccc}
   \toprule
   & $X_J = 1$ & $X_J = 0$ & Total\\ \midrule
   $Y = 1$ & $\eta(J)r_1$ & $r_1 - \eta(J)r_1$ & $r_1$\\
   $Y = 0$ & $\eta(J)r_0$ & $r_0 - \eta(J)r_0$ & $r_0$\\
   \midrule
   Total   & $\eta(J)$ & $1 - \eta(J)$ & $1$\\
   \bottomrule
  \end{tabular}
 \end{subtable}
 \begin{subtable}[b]{.49\linewidth}
  \centering
  \caption{Observed distribution.}\label{table:contingency:observed}
  \begin{tabular}{cccc}
   \toprule
   & $X_J = 1$ & $X_J = 0$ & Total\\ \midrule
   $Y = 1$ & $\eta_1(J)$ & $r_1 - \eta_1(J)$ & $r_1$\\
   $Y = 0$ & $\eta_0(J)$ & $r_0 - \eta_0(J)$ & $r_0$\\
   \midrule
   Total   & $\eta(J)$ & $1 - \eta(J)$ & $1$\\
   \bottomrule
  \end{tabular}
 \end{subtable}
\end{table}

Suppose that $\Pr(Y = l) = r_l$ for each class $l \in \{0, 1\}$.
From two binary variables $X_J$ and $Y$, we obtain a $2 \times 2$ contingency table, where each cell denotes the joint probability $\Pr(X_J = l, Y = l')$ with $l, l' \in \{0, 1\}$ and can be described as a four-dimensional probability vector $\vec{p}$:
\begin{align*}
 \vec{p} = \Big(\,\Pr(X_J = 1, Y = 1), \Pr(X_J = 1, Y = 0), \Pr(X_J = 0, Y = 1), \Pr(X_J = 0, Y = 0)\,\Big).
\end{align*}
Let $\vec{p}_{\mathrm{E}}$ be the probability vector under the null hypothesis $X_J \ip Y$ and $\vec{p}_{\mathrm{O}}$ be the empirical vector obtained from $N$ observations.
The difference between two distributions $\vec{p}_{\mathrm{O}}$ and $\vec{p}_{\mathrm{E}}$ can be measured by the Kullback--Leibler (KL) divergence
$\KL(\vec{p}_{\mathrm{O}}, \vec{p}_{\mathrm{E}}) = \sum_i p_{\mathrm{O}i} \log (p_{\mathrm{O}i} / p_{\mathrm{E}i})$,
and the independence $X_J \ip Y$ is translated into the condition $\KL(\vec{p}_{\mathrm{O}}, \vec{p}_{\mathrm{E}}) = 0$.
In the G-test, which is a special case of likelihood ratio test, the test statistic is given as $\lambda = 2N\KL(\vec{p}_{\mathrm{O}}, \vec{p}_{\mathrm{E}})$, which follows the $\chi^2$-distribution with the degree of freedom $1$.

In our case, for each combination $J \in 2^V$, the probability vector $\vec{p}_{\mathrm{E}}$ under the null is given as
\begin{align*}
 \vec{p}_{\mathrm{E}} = \bigl(\,\eta(J)r_1,\, \eta(J)r_0,\, r_1 - \eta(J)r_1,\, r_0 - \eta(J)r_0\,\bigr),
\end{align*}
where $\eta(J)$ is the copula support of a feature combination $J$ and $r_l$ is the ratio of the label $l \in \{0, 1\}$ in the dataset.
In contrast, the observed probability vector $\vec{p}_{\mathrm{O}}$ is given as
\begin{align*}
 \vec{p}_{\mathrm{O}} = \bigl(\,\eta_1(J),\, \eta_0(J),\, r_1 - \eta_1(J),\, r_0 - \eta_0(J)\,\bigr),
\end{align*}
where $\eta_1(J) = (1/N)\vec{x}_J \cdot \vec{y}$ and $\eta_0(J) = (1/N)\vec{x}_J \cdot (1 - \vec{y})$ with $\vec{x}_J$ defined in Equation~\eqref{eq:rankvector}.
These two distributions are shown in Table~\ref{table:contingency}.
While it is known that the statistic $\lambda$ does not exactly follow the $\chi^2$-distribution if one of components of $\vec{p}_{\mathrm{E}}$ is too small,
such situation does not usually occur since $\eta(J)$ is not too large as $\eta(J) \le 0.5$ by definition and not too small as such combinations are not testable, which will be shown in Section~\ref{subsec:alg}.

In the following, we write the set of all possible probability vectors by $\vec{\Pc} = \{\,\vec{p} \mid p_i \ge 0, \sum p_i = 1\,\}$, and
its subset given marginals $a$ and $b$ as $\vec{\Pc}(a, b) = \{\,\vec{p} \in \vec{\Pc} \mid p_1 + p_2 = a,\, p_1 + p_3 = b\,\}$.

\subsection{Multiple Testing Correction}\label{subsec:mcp}
Since we have $2^{d}$ hypotheses as each feature combination translated into a hypothesis, we need to perform \emph{multiple testing correction} to control the FWER (family-wise error rate), otherwise we will find massive false positive combinations.
The most popular multiple testing correction method is \emph{Bonferroni correction}~\cite{Bonferroni36}.
In the method, the predetermined significance level $\alpha$ (e.g.~$\alpha = 0.05$) is corrected as $\delta_{\Bon} = \alpha / m$, where $m$ is called a \emph{correction factor} and is the number of hypotheses $m = 2^d$ in our case.
Each hypothesis $J$ is declared as significant only if $\text{\emph{p}-value}(J) < \delta_{\Bon} = \alpha / 2^d$.
Then the resulting FWER $< \alpha$ is guaranteed.
However, it is well known that Bonferroni correction is too conservative.
In particular in our case, the correction factor $2^d$ is too massive due to the combinatorial effect and it is almost impossible to find significant combinations, which will generate massive \emph{false negatives}.

Here we use the \emph{testability} of hypotheses introduced by \citet{Tarone90} and widely used in the literature of significant pattern mining~\cite{Llinares2015KDD,Papaxanthos16,SugiyamaSDM15,Terada13}, which allows us to prune unnecessary feature combinations that can never be significant while controlling the FWER at the same level.
Hence Tarone's testability always offers better results than Bonferroni correction if it is applicable.
The only requirement of Tarone's testability is the existence of the lower bound of the $p$-value given the marginals of the contingency table, which are $\eta(J)$ and $r_0$ (or $r_1$) in our setting.
In the following, we prove that we can analytically obtain the tight upper bound of the KL divergence, which immediately leads to the tight lower bound of the $p$-value.

\begin{theorem}
 \label{theorem:KLmax}
 \textup{({\scshape Tight upper bound of the KL divergence})}
 For $a, b \in [0, 1]$ with $a \le b \le 1/2$ and a probability vector $\vec{p}_{\mathrm{E}} = (ab, a(1 - b), (1 - a)b, (1 - a)(1 - b))$,
 \begin{align}
  \label{eq:KLmax}
  \KL(\vec{p}, \vec{p}_{\mathrm{E}}) < a\log\frac{1}{b} &+ (b - a)\log\frac{b - a}{(1 - a)b} + (1 - b)\log\frac{1}{(1 - a)}
 \end{align}
 for all $\vec{p} \in \vec{\Pc}(a, b)$ and this is tight (see Appendix for its proof).
 \end{theorem}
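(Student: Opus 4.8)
The plan is to reduce the bound to a one-dimensional optimization over the segment $\vec{\Pc}(a,b)$ and to locate the maximizer at an extreme point. Fixing the two marginals $p_1 + p_2 = a$ and $p_1 + p_3 = b$ leaves a single degree of freedom, so I would parametrize $\vec{\Pc}(a,b)$ by $t := p_1$, giving $p_2 = a - t$, $p_3 = b - t$, and $p_4 = 1 - a - b + t$. The constraints $p_i \ge 0$ together with $a \le b$ and $a + b \le 1$ (both implied by $a \le b \le 1/2$) confine $t$ to the interval $[0, a]$. Writing $f(t) := \KL(\vec{p}, \vec{p}_{\mathrm{E}})$, the goal becomes to show that $\max_{t \in [0, a]} f(t)$ equals the right-hand side of~\eqref{eq:KLmax} and is attained at $t = a$.

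Since $\KL(\cdot, \vec{p}_{\mathrm{E}})$ is convex in its first argument and $\vec{\Pc}(a,b)$ is a line segment, $f$ attains its maximum at one of the two endpoints $t \in \{0, a\}$. To make this precise and to pin down the minimizer, I would differentiate; after the constant terms cancel, the derivative telescopes to
\[
 f'(t) = \log\frac{t\,(1 - a - b + t)}{(a - t)(b - t)}.
\]
Solving $f'(t) = 0$ yields the unique interior critical point $t^{*} = ab$, which is precisely the point $\vec{p} = \vec{p}_{\mathrm{E}}$ with $f(ab) = 0$; hence $f$ is unimodal with an interior minimum and its maximum lies at an endpoint. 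A direct evaluation at $t = a$ (where $p_2 = 0$, so that term drops out) reproduces exactly the three-term expression on the right-hand side of~\eqref{eq:KLmax}.

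The main obstacle is the endpoint comparison: convexity alone does not tell us which of $f(0)$ and $f(a)$ is larger, and this is exactly where the hypothesis $b \le 1/2$ must enter. I would exploit the reflection $Y \mapsto 1 - Y$, which permutes the coordinates as $(p_1, p_2, p_3, p_4) \mapsto (p_2, p_1, p_4, p_3)$, carries $\vec{p}_{\mathrm{E}}$ with marginal $b$ to the same object with marginal $1 - b$, and sends $t \mapsto a - t$, thereby interchanging the two endpoints. Since $\KL$ is invariant under a consistent permutation of coordinates, the difference $\Delta(b) := f(a) - f(0)$ satisfies $\Delta(b) = -\Delta(1 - b)$; in particular $\Delta(1/2) = 0$, matching the symmetric case. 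It then remains to fix the sign of $\Delta(b)$ on $[a, 1/2]$, which I would do either by showing $\Delta$ is monotone in $b$ (checking the sign of $\Delta'(b)$) or by collecting the logarithmic terms of $\Delta(b)$ into a form that is manifestly nonnegative precisely when $b \le 1/2$. Establishing this sign is the technical heart of the argument.

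Finally, strictness and tightness follow from unimodality: for every $\vec{p} \in \vec{\Pc}(a,b)$ with $p_2 > 0$, i.e.\ $t < a$, strict convexity of $\KL$ gives $f(t) < f(a)$, which is the strict inequality in~\eqref{eq:KLmax}; and the bound cannot be improved because $f(a)$ is the supremum, approached as $t \to a$. This establishes the claimed tight upper bound.
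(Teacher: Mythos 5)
Your overall strategy is the same as the paper's: parametrize $\vec{\Pc}(a,b)$ by $t = p_1 \in [0,a]$, use strict convexity of $f(t) = \KL(\vec{p}, \vec{p}_{\mathrm{E}})$ along the segment to push the maximum to an endpoint, evaluate at $t = a$, and then compare the two endpoints. Your setup is correct: the interval $[0,a]$, the derivative $f'(t) = \log\frac{t(1-a-b+t)}{(a-t)(b-t)}$, the unique interior critical point $t^{*} = ab$ where $f$ vanishes, and the evaluation at $t = a$ all check out, and invoking convexity of $\KL$ in its first argument is cleaner and more robust than the paper's explicit second-derivative computation. But there is a genuine gap exactly where you yourself locate the ``technical heart'': you never establish that $f(a) \ge f(0)$ when $b \le 1/2$. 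The reflection argument is correct as far as it goes --- $\Delta(b) := f(a) - f(0)$ satisfies $\Delta(b) = -\Delta(1-b)$, hence $\Delta(1/2) = 0$ --- but antisymmetry alone cannot determine the sign of $\Delta(b)$ for $b < 1/2$, and you defer that determination to one of two computations you do not carry out. Since this endpoint comparison is the only place the hypothesis $b \le 1/2$ enters, the proof as written does not establish the theorem.

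The gap is fillable along exactly the line you suggest. Writing $b' = 1 - b$, the two endpoint values combine to
\begin{align*}
 \Delta(b) = -b\log b + b'\log b' + (b-a)\log(b-a) - (b'-a)\log(b'-a),
\end{align*}
so that
\begin{align*}
 \frac{\partial \Delta}{\partial b} = \log\frac{(b-a)(b'-a)}{b\,b'} \le 0,
\end{align*}
since $b - a \le b$ and $b' - a \le b'$; combined with your anchor $\Delta(1/2) = 0$ this gives $\Delta(b) \ge 0$ for all $a \le b \le 1/2$. (Note the hypothesis $b \le 1/2$ enters only through the anchor point; the monotonicity itself is unconditional.) The paper instead fixes $b$ and varies $a$: it shows $\partial\Delta/\partial a = \log\frac{1-a-b}{b-a}$, which is nonnegative precisely when $b \le 1/2$, and anchors at $\Delta = 0$ for $a = 0$. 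Either one-line computation would complete your argument; without one of them the claim is unproven. A final caveat, which your write-up shares with the paper: the vector $(a, 0, b-a, 1-b)$ lies in $\vec{\Pc}(a,b)$ as the paper defines it and attains equality, so the strict inequality in~\eqref{eq:KLmax} should really be read as a supremum over vectors with $p_2 > 0$, or restated with $\le$.
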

\textit{Proof}.\hspace*{.5em}
 Let $f(x) = \KL(\vec{p}, \vec{p}_{\mathrm{E}})$ with $\vec{p} = (x, a - x, b - x, (1 - b) - (a - x))$ and $a' = 1 - a$, $b' = 1 - b$. We have
 \begin{align*}
 f(x) =\ &x\log\frac{x}{ab} + (a - x)\log\frac{a - x}{ab'} + (b - x)\log\frac{b - x}{a'b} + (b' - a + x)\log\frac{b' - a + x}{a'b'}.
\end{align*}
 The second derivative of $f(x)$ is given as
\begin{align*}
 \frac{\partial^2 f}{\partial x^2} = \frac{1}{1 - a - b + x} + \frac{1}{b - x} + \frac{1}{a - x} + \frac{1}{x},
\end{align*}
which is always positive from the constraint $0 < x < \min\{a, b\} = a$.
Thus $f(x)$ is maximized when $x$ goes to $0$ or $a$.
The limit of $f(x)$ is obtained as
\begin{align*}
 \lim_{x \to 0} f(x) &= a\log\frac{1}{b'} + b\log\frac{1}{a'} + (b' - a)\log\frac{b' - a}{a'b'},\\
 \lim_{x \to a} f(x) &= a\log\frac{1}{b} + (b - a)\log\frac{b - a}{a'b} + (1 - b)\log\frac{1}{a'}.
\end{align*}
To check which is larger, let $\delta$ be the difference $\lim_{x \to a} f(x) - \lim_{x \to 0} f(x)$. Then it is obtained as
$\delta = - b\log b + b'\log b' + (b - a)\log(b - a) - (b' - a)\log(b' - a)$.
The partial derivative of $\delta$ with respect to $a$ is
$\partial \delta / \partial a = \log(1 - a - b) - \log(b - a)$,
which is always positive as $(1 - a - b) - (b - a) = 1 - 2b \ge 0$ with the condition $b \le 1/2$.
Hence the difference $\delta$ takes the minimum value at $a = 0$ and we obtain
\begin{align*}
 \delta \ge - b\log b + b'\log b' + b'\log b' - b'\log b' = 0.
\end{align*}
Thus $f(x)$ is the tight upper bound when $x \to a$.$\hfill\square$

Here we formally introduce how to prune unnecessary hypotheses by Tarone's testability.
Let $\psi(J)$ be the lower bound of the $p$-value of $J$ obtained from the upper bound of the KL-divergence proved in Theorem~\ref{theorem:KLmax}.
Suppose that $J_1, J_2, \dots, J_{2^d}$ be the sorted sequence of all combinations such that
\begin{align*}
 \psi(J_1) \le \psi(J_2) \le \psi(J_3) \le \dots \le \psi(J_{2^d})
\end{align*}
is satisfied.
Let $m$ be the threshold such that
\begin{align}
 \label{eq:testability}
 m \cdot \psi(J_m) < \alpha \quad\text{and}\quad (m + 1) \cdot \psi(J_{m + 1}) \ge \alpha.
\end{align}
\citet{Tarone90} showed that the FWER is controlled under $\alpha$ with the correction factor $m$.
The set $\Tc = \{J_1, J_2, \dots, J_m\}$ is called \emph{testable} combinations, and each $J \in \Tc$ is significant if $\text{\emph{p}-value}(J) < \delta_{\Tar} = \alpha / m$.
On the other hand, combinations $J_{m + 1}, \dots, J_{2^d}$ are called \emph{untestable} as they can never be significant.
Since $m \ll 2^d$ usually holds, we can expect to obtain higher statistical power in Tarone's method compared to Bonferroni method.
Moreover, we present C-Tarone in the next subsection, which can enumerate such testable combinations without seeing untestable ones, hence we overcome combinatorial explosion of the search space of combinations.

Let us denote by $B(a, b)$ the upper bound provided in Equation~\eqref{eq:KLmax}.
We analyze the behavior of the bound $B(a, b)$ as a function of $a$ with fixed $b$.
This is a typical situation in our setting as $a$ corresponds to the copula support $\eta(J)$, which varied across combinations, while $b$ corresponds to the class ratio $r_1$, which is fixed in each analysis.
Assume that $b \le 1/2$. When $a < b$, we have
$\partial B(a, b) / \partial a = \log (1 - a) / (b - a) > 0$,
hence it is monotonically increases as $a$ increases.
When $b < a < 1/2$, we have
$\partial B(a, b) / \partial a = \log(a - b) / a < 0$,
thereby it monotonically decreases as $a$ increases.
We illustrate the bound $B(a, b)$ with $b = 0.3$ and the corresponding minimum achievable $p$-value with the sample size $N = 100$ in Figure~\ref{figure:klmax_pmin} in Appendix.

\subsection{Algorithm}\label{subsec:alg}

We present the algorithm of C-Tarone that efficiently finds testable combinations $J_1, J_2, \dots$, $J_m$ such that $\psi(J_1) \le \psi(J_2) \le \dots \le \psi(J_m)$, where $m$ satisfies the condition~\eqref{eq:testability}.
We summarize C-Tarone in Algorithm~\ref{alg:enum}, which performs depth-first search to find $J_1, J_2, \dots$, $J_m$ such that $\psi(J_1) \le \psi(J_2) \le \dots \le \psi(J_m)$, where $m$ satisfies the condition~\eqref{eq:testability}.
Suppose that $r_1 \le 1/2 \le r_0$.
Since the lower bound of the $p$-value $\psi(J)$ takes the minimum value when $\eta(J) = r_1$ (see Figure~\ref{figure:klmax_pmin} in Appendix by letting $a = \eta(J)$ and $b = r_1$) and is monotonically decreasing as $\eta(J)$ decreases, for any $\eta(J) \le r_1$, $\Cc \supseteq \Tc$ with $\Cc = \{J \in 2^V \mid \eta(J) \ge \sigma\}$ and $\Tc = \{J \in 2^V \mid \psi(J) \le B(\sigma, r_1)\}$ is always guaranteed, where $\sigma$ is a threshold for copula supports.
Thus if the condition~\eqref{eq:testability} is satisfied for some $m \le |\Tc|$, the $m$ smallest combinations in $\Tc$ are the testable combinations.

\IncMargin{12pt}
\begin{algorithm}[t]
 \SetKwInOut{Input}{input}
 \SetKwInOut{Output}{output}
 \SetFuncSty{textrm}
 \SetCommentSty{textrm}
 \SetKwFunction{SignificantFeatureCombinationSearch}{{\scshape SignificantFeatureCombinationSearch}}
 \SetKwFunction{CTarone}{{\scshape C-Tarone}}
 \SetKwFunction{DFS}{{\scshape DFS}}
 \SetKwFunction{SignificanceTesting}{{\scshape SignificanceTesting}}
 \SetKwProg{myfunc}{}{}{}

 \myfunc{\CTarone{$D$, $\alpha$}}{
  $\sigma \gets 0$;\quad $\Cc \gets \emptyset$;\tcp*[f]{$\sigma$ is a global variable}\\
  \DFS{$\emptyset$, $0$, $\Cc$, $D$, $\alpha$};\\
  $\Tc \gets \{K \in \Cc \mid \psi(K) \le B(\sigma, r_1)\}$;\\
  \tcp*[f]{The set of testable combinations}\\
 \SignificanceTesting{$\Tc$, $\alpha$};
 }
 \myfunc{\DFS{$J$, $j_{\mathrm{prev}}$, $\Cc$, $D$, $\alpha$}}{
 \ForEach{$j \in \{j_{\mathrm{prev}} + 1, \dots, d\}$}{
 $J \gets J \cup \{j\}$;\\
  Compute $\eta(J)$ by Equation~\eqref{eq:copula};\\
 \If{$\eta(J) > \sigma$}{
 Compute $\psi(J)$;\ \  $\Cc \gets \Cc \cup \{J\}$;\\
 $\Tc \gets \{K \in \Cc \mid \psi(K) \le B(\sigma, r_1)\}$;\\
 \While{$|\Tc| B(\sigma, r_1) \ge \alpha$}{
 $J_{\min} \gets \text{argmin}_{K \in \Cc} \eta(K)$;\\
  $\sigma \gets \eta(J_{\min})$;\ \  $\Cc \gets \Cc \setminus \{J_{\min}\}$;\\
  $\Tc \gets \Tc \setminus \{J_{\min}\}$;\\
 }
 \DFS{$J$, $j$, $\Cc$, $D$, $\alpha$};
 }
 $J \gets J \setminus \{j\}$;
 }
 }
 \myfunc{\SignificanceTesting{$\Tc$, $\alpha$}}{
 \ForEach{$J \in \Tc$}{
 \lIf{$p\text{-value}(J) < \alpha \,/\, |\Tc|$}{output $J$}
 }
 }
 \caption{C-Tarone.}
 \label{alg:enum}
\end{algorithm}
\DecMargin{12pt}

Moreover, since $\eta(J)$ has the monotonicity with respect to the inclusion relationship, that is, $\eta(J) \ge \eta(J \cup \{j\})$ for all $j \in V \setminus J$, finding the set $\Cc$ is simply achieved by DFS (depth-first search): Starting from the smallest combination $\emptyset$ with assuming $\eta(\emptyset) = 1$,
if $\eta(J) \ge \sigma$ for a combination $J$, we update $J$ by adding $j \in V \setminus J$ and recursively check $J$. Otherwise if $\eta(J) < \sigma$, we prune all combinations $K \supseteq J$ as $\eta(K) < \sigma$ holds.

Our algorithm dynamically updates the threshold $\sigma$ to enumerate testable combinations while pruning massive unnecessary combinations.
First we set $\sigma = 0$, which means that all combinations are testable.
Whenever we find a new combination $J$, we update $\Cc$ and $\Tc$ and check the condition~\eqref{eq:testability}.
If $|\Tc| B(\sigma, r_1) > \alpha$ holds, $\sigma$ is too low and $\Tc$ will become too large with the current $\sigma$, hence we update $\sigma$ to $\min_{J \in \Cc} \eta(J)$ and remove the combination $\argmin_{J \in \Cc} \eta(J)$.
Finally, when the algorithm stops, it is clear that the set $\Tc$ coincides with the set of testable combinations, hence each combination $J \in \Tc$ is significant if the $p$-value of $J$ is smaller than $\delta_{\Tar} = \alpha / |\Tc|$.

Since Algorithm~\ref{alg:enum} always outputs the set of testable combinations $\Tc$, which directly follows from $\Cc \supseteq \Tc$ and the monotonicity of $\eta(J)$, the {\scshape SignificanceTesting} function finds all significant combinations with the FWER $\le \alpha$.
Thus Algorithm~\ref{alg:enum} always finds all significant feature combinations with controlling the $FWER$ under $\alpha$.
Moreover, C-Tarone is independent of the feature ordering and the above completeness with respect to the set of significant combinations is always satisfied.
The time complexity of C-Tarone is $O(|\Tc|)$.

\begin{figure}[t]
 \centering
 \includegraphics[width=.6\linewidth]{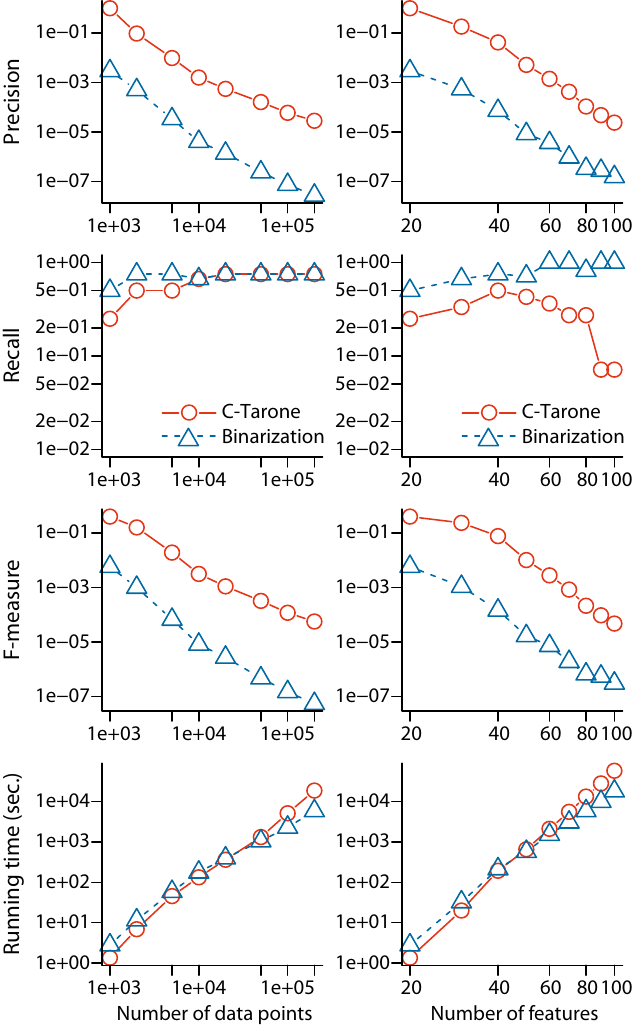}
 \caption{Results on synthetic data with the minor class ratio $r_1 = 0.5$.
 Regarding the scale of precision and F-measure, see comment at the last paragraph just before Section~\ref{subsec:synth}.
 The number of features is $d = 20$ in the left column and the sample size is $N = \text{1,000}$ in the right column.
 Both x- and y-axes are in logarithmic scale. C-Tarone is shown in red circles, the binarization approach in blue triangles.}
 \label{figure:synth_all}
\end{figure}

\begin{figure*}[t]
 \centering
 \includegraphics[width=\linewidth]{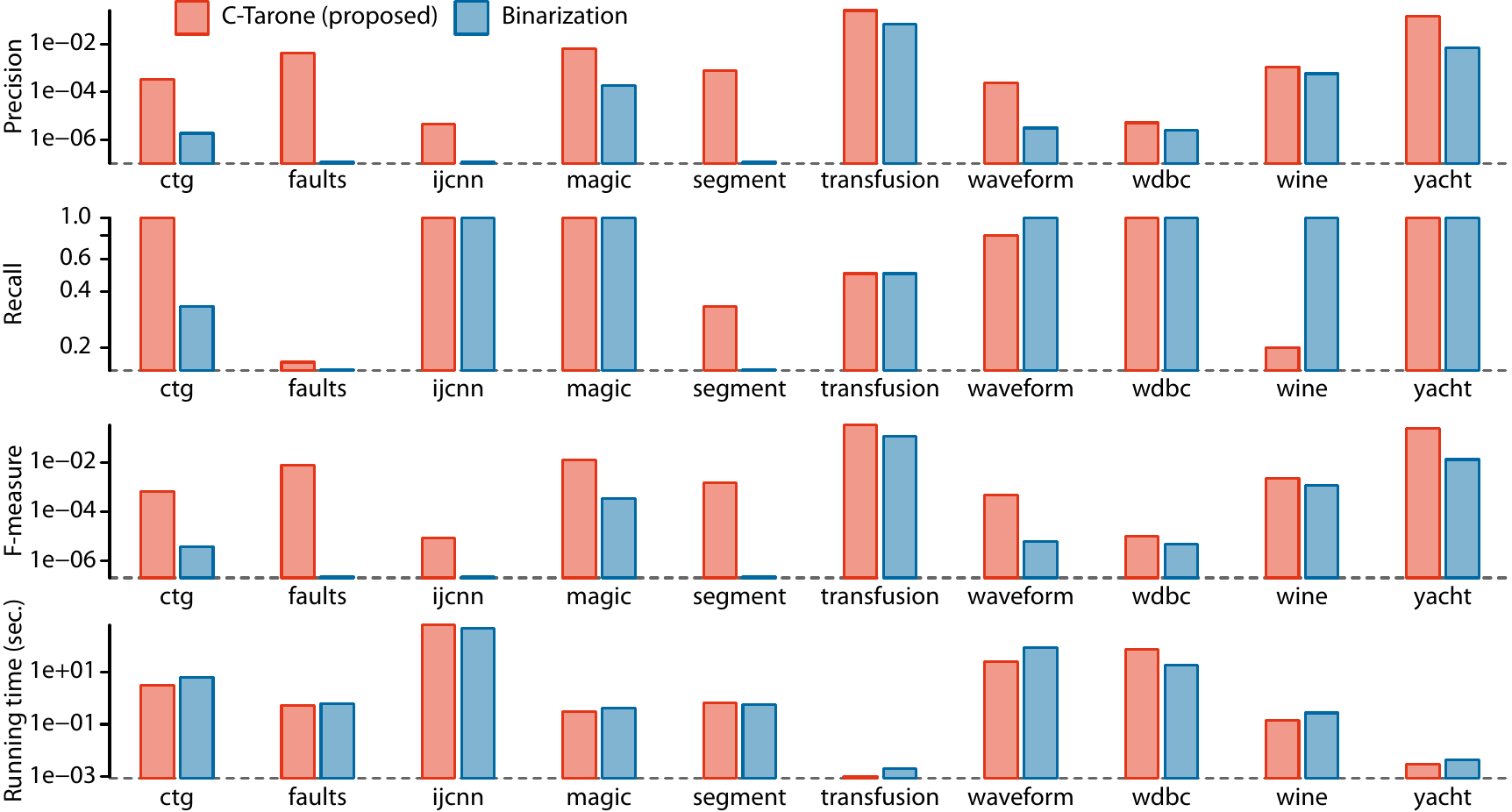}
 \caption{Results on real data. Regarding the scale of precision and F-measure, see the comment at the last paragraph just before Section~\ref{subsec:synth}. The y-axis is in logarithmic scale. C-Tarone is shown in red and the binarization approach is shown in blue. Higher (taller) is better in precision, recall, and F-measure, while lower is better in running time.}
 \label{figure:real_results}
\end{figure*}

\section{Experiments}\label{sec:exp}
We examine the effectiveness and the efficiency of C-Tarone using synthetic and real-world datasets.
We used Amazon Linux AMI release 2017.09 and ran all experiments on a single core of 2.3 GHz Intel Xeon CPU E7-8880 v3 and 2.0 TB of memory.
All methods were implemented in \texttt{C}/\texttt{C++} and compiled with {\ttfamily gcc} 4.8.5.
The FWER level $\alpha = 0.05$ throughout experiments.

There exists no existing method that can enumerate significant feature combinations from continuous data with multiple testing correction.
Thus we compare C-Tarone to \emph{significant itemset mining} method with prior \emph{binarization} of a given dataset since significant itemset mining offers to enumerate all significant feature combinations while controlling the FWER from binary data.
We employ median-based binarization as a preprocessing.
For each feature $j \in V$, we pick up the median of $\vec{v}^j = (v_1^j, v_2^j, \dots, v_N^j)$, denoted by $\med(j)$, and
binarize each value $v_i^j$ as a pair $(v_i^{\le \med(j)}, v_i^{> \med(j)})$, where $v_i^{\le \med(j)} = \mathtt{1}$ if $v_i^{j} \le \med(j)$ and $\mathtt{0}$ otherwise, and $v_i^{> \med(j)} = \mathtt{1}$ if $v_i^{j} > \med(j)$ and $\mathtt{0}$ otherwise.
Thus a given dataset is converted to the binarized dataset with $2d$ features.
We use the state-of-the-art significant itemset mining algorithm LAMP ver.2~\cite{Minato14} and employ implementation provided by~\cite{Llinares18CASMAP}, which incorporates the fastest frequent pattern mining algorithm LCM~\cite{Uno04} to enumerate testable feature combinations from binary data.
Note that, in terms of runtime, comparison with the brute-force approach of testing all the $2^d$ combinations with the Bonferroni correction is not valid as the resulting FWER is different between the brute-force and the proposed method.

We also tried other binarization approaches, \emph{interordinal scaling} used in numerical pattern mining~\cite{Kaytoue11} and \emph{interval binarization} in subgroup discovery~\cite{Grosskreutz09} as a preprocessing of significant itemset mining.
However, both preprocessing generate too many binarized dense features, resulting in the lack of scalability in the itemset mining step in the enumeration of testable combinations.
Hence we do not employ them as comparison partners.
Details of these binarization techniques are summarized in Appendix.

To evaluate the efficiency of methods, we measure the running time needed for enumerating all significant combinations.
In the binarization method, we exclude the time used for binarization as this preprocessing step is efficient enough and negligible compared to the pattern mining step.

To examine the quality of detected combinations, we compute precision, recall, and the F-measure by comparing such combinations with those obtained by the standard decision tree method CART~\cite{Breiman84}, which obtains multiplicative combinations of features in the form of binary trees.
We used the \texttt{rpart} function in R with its default parameter setting, where the Gini index is used for splitting and the minimum number of data points that must exist in a node is $20$.
We apply the decision tree to each dataset and retrieve all the paths from the root to leaf nodes of the learned tree.
In each path, we use the collection of features used in the path as a positive feature combination of the ground truth, that is, a feature combination found by C-Tarone (or binarization method) is deemed to be true positive if and only if it constitutes one of full paths from the root to a leaf of the learned decision tree.
Note that the FWER is always controlled under $\alpha$ in both of C-Tarone and binarization method, and our aim is to empirically examine the quality of the feature combinations compared to those selected by a standard decision tree.
We used not forests but a single tree as the ground truth depends on the number of trees if we use forests, resulting in arbitrary results.

Please note that, due to the fact that there are up to $2^d$ feature combinations and we only count an exact match between a retrieved pattern and a true pattern as a hit, precision and F-measure will be close to 0.
Still, we compute them to allow for a comparison of the relative performance of the different approaches.
Evaluation criteria that take overlaps between retrieved patterns and true patterns (partial matches) into account would lead to higher levels of precision, but they are a topic of ongoing research and not a focus of this work.

\textbf{Results on Synthetic Data.}\label{subsec:synth}
First we evaluate C-Tarone on synthetic data with varying the sample size $N$ from $1,000$ to $200,000$, the number $d$ of features from $20$ to $100$, and setting the class ratio to $r_1 = 0.5$ or $r_1 = 0.2$, i.e., the number $N_1$ of samples in the minor class is $N/2$ or $N/5$.
In each dataset, we generate 20\% of features that are associated with the class labels.
More precisely, first we generate the entire dataset from the uniform distribution from $0$ to $1$ and assign the class label $1$ to the first $N_1$ data point.
Then, for the $N_1$ data points in the class $1$, we pick up one of the 20\% of associated features and copy it to every associated feature with adding Gaussian noise with $(\mu, \sigma^2) = (0, 0.1)$. Hence there is no correlation in the class $0$ across all features and there are positive correlations among such 20\% of features in the class $1$.
The other 80\% are uninformative features.

Results are plotted in Figure~\ref{figure:synth_all} for $r_1 = 0.5$ (classes are balanced).
See Figure~\ref{figure:synth_02} in Appendix for $r_1 = 0.2$ (classes are imbalanced).
In the figure, we plot results with varying $N$ while fixing $d = 20$ on the left column and those with varying $d$ while fixing $N = 1,000$ on the right column.

In comparison with the median-based binarization method (plotted in blue), C-Tarone (plotted in red) has a clear advantage regarding the precision, which is several orders of magnitude higher than the binarization method in every case.
This is why binarization method cannot distinguish correlated and uncorrelated combinations as we discussed in Introduction and illustrated in Figure~\ref{figure:example}, resulting in including uncorrelated features into significant combinations.
Although recall is competitive across various $N$ and $d$, in all cases, the F-measure of C-Tarone are higher than those of the median-based binarization method.
In both methods, precision drops when the sample size  $N$ becomes large: As we gain more and more statistical power for larger $N$, many feature combinations, even those with very small dependence to the class labels and not used in the decision tree, tend to reach statistical significance.

Although the algorithm LCM (itemset mining) used in the binarization approach is highly optimized with respect to the efficiency, C-Tarone is competitive with it on all datasets, as can be seen in Figure~\ref{figure:synth_all} (bottom row).
To summarize, we observe that C-Tarone improves over the competing binarization approach in terms of the F-measure in detecting higher quality feature combinations in classification.

\textbf{Results on Real Data.}
We also evaluate C-Tarone on real-world datasets shown in Table~\ref{table:real_stat} in Appendix, which are benchmark datasets for binary classification from the UCI repository~\cite{Lichman13}.
To clarify the exponentially large search space, we also show the number $2^d$ of candidate combinations for $d$ features in the table.
All datasets are balanced to maximize the statistical power for comparing detected significant combinations, i.e.\ $r_1 = 0.5$.
If they are not balanced in the original dataset, we randomly subsample data points from the larger class.

We summarize results in Figure~\ref{figure:real_results}.
Again, C-Tarone shows higher precision in all datasets than the binarization method and better or competitive recall, resulting in higher F-measure scores in all datasets.
In addition, running time is competitive with the binarization method, which means that C-Tarone can successfully prune the massive candidate space for significant feature combinations.
These results demonstrate the effectiveness of C-Tarone.

\section{Conclusion}\label{sec:conclusion}
In this paper, we have proposed a solution to the open problem of finding all multiplicative feature interactions between \emph{continuous} features that are significantly associated with an output variable after rigorously controlling for multiple testing. While interaction detection with multiple testing has been studied before, our approach, called \emph{C-Tarone}, is the first to overcome the problem of detecting all higher-order interactions from the enormous search space $2^d$ for $d$ features.

Our work opens the door to many applications of searching significant feature combinations, in which the data is not adequately described by binary features, including large fields such as data analysis for high-throughput technologies in biology and medicine.
Our work here addresses the problem of finding continuous \emph{features}.
Finding significant combinations associated with \emph{continuous output variables} is an equally challenging and practically relevant problem, that we will tackle in future work.


\appendix

\section{Related Work}\label{sec:related}
Since the only line of work that tries to find multiplicative feature combinations while controlling the FWER is \emph{significant pattern mining}, we provide an overview of this field in the following.

Significant pattern mining introduces statistical significance into the task of contrast (or {\it discriminative}) pattern mining~\cite{Dong13}, where the objective is to find discriminative patterns with respect to class partitioning of a dataset.
After early work on multiple testing correction in association rule mining~\cite{Hamalainen12,Webb07}, \citet{Terada13}~were the first to achieve control of the FWER in itemset mining, by successfully combining a pattern mining algorithm and Tarone's trick~\cite{Tarone90}.
The enumeration algorithm has been improved in LAMP ver.~2~\cite{Minato14} and significant subgraph mining~\cite{SugiyamaSDM15}.
To date, significant pattern mining has been extended to various types of tests and data, including a Westfall-Young permutation test to treat independencies among patterns~\cite{Llinares2015KDD,Terada13BIBM}, logistic regression~\cite{Terada16PAKDD} or a Cochran--Mantel--Haenszel (CMH) test~\cite{Llinares17,Papaxanthos16} for categorical covariates, hypothesis streams~\cite{Webb16}, and top-$K$ significant patterns~\cite{Pellegrina18}.
However, none of the above studies succeeded to directly perform significant pattern mining on continuous variables without prior binarization.

Although the field of \emph{subgroup discovery}~\cite{Atzmueller15,Novak09,Herrera11} also considers measures of statistical dependence for finding multiplicative feature combinations, e.g.~\cite{Grosskreutz09, Mampaey15,vanLeeuwen16}, none of these methods accounts for multiple testing by controlling the FWER.

\begin{figure}[t]
 \centering
 \includegraphics[width=.8\linewidth]{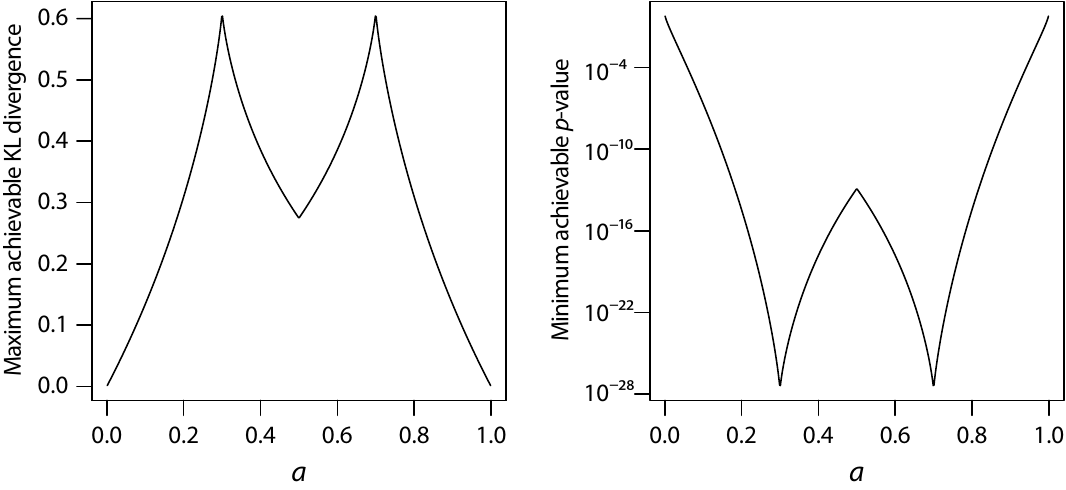}
 \caption{The upper bound $B(a, b)$ of the KL divergence (left) and the corresponding \textit{p}-value (right) with $N = 100$ with respect to changes in $a$ when $b = 0.3$.}
 \label{figure:klmax_pmin}
\end{figure}

\section{Additional Binalization Methods}
In interordinal scaling, each binarized feature is in the form of ``$\le a$'' or ``$\ge a$'', where endpoints $a$ are from a dataset, that is, $a \in \{v_1^j, v_2^j, \dots, v_N^j\}$ for a feature $j \in \{1, 2, \dots, d\}$.
Thus, for an $d$-dimensional real-valued vector $\vec{v}_i \in D$, each element $v_i^j$ is expanded as the $2N$-dimensional binary vector such that
\begin{align*}
 \Bigl(\,v_i^{\le v_1^j}, v_i^{\le v_2^j}, \dots, v_i^{\le v_N^j}, v_i^{\ge v_1^j}, v_i^{\ge v_2^j}, \dots, v_i^{\ge v_N^j}\,\Bigr),
\end{align*}
where each value for the binarized feature $v_i^{\le v_k^j} = \mathtt{1}$ if $v_i^j \le v_k^j$ and \texttt{0} otherwise.
As a result, the dataset $D$ is converted into the binary dataset with $2dN$ features.
In interval binarization, each binarized feature is in the form of ``$(a, b]$'', where endpoints $a$, $b$ are from data, and
each element $v_i^j$ of an $d$-dimensional vector $\vec{v}_i$ is expanded as the $(N(N - 1) / 2)$-dimensional binary vector such that
\begin{align*}
 \Bigl(\,v_i^{(v_1^j,v_2^j]}, v_i^{(v_1^j,v_3^j]}, \dots, v_i^{(v_1^j,v_N^j]}, v_i^{(v_2^j,v_3^j]}, \dots, v_i^{(v_{N - 1}^j,v_N^j]}\,\Bigr).
\end{align*}
Thus a dataset $D$ is converted into the binary dataset with $dN(N - 1)/2$ features.
Both interordinal scaling and interval binarization could finish their computation for a tiny dataset with $(N, d) = (50, 5)$ in approximately 24 hours, but did not finish after 48 hours for $(N, d) = (100, 10)$, and they exceeded the memory limit (2.0~TB) for larger datasets.

\begin{figure}[t]
 \centering
 \includegraphics[width=.6\linewidth]{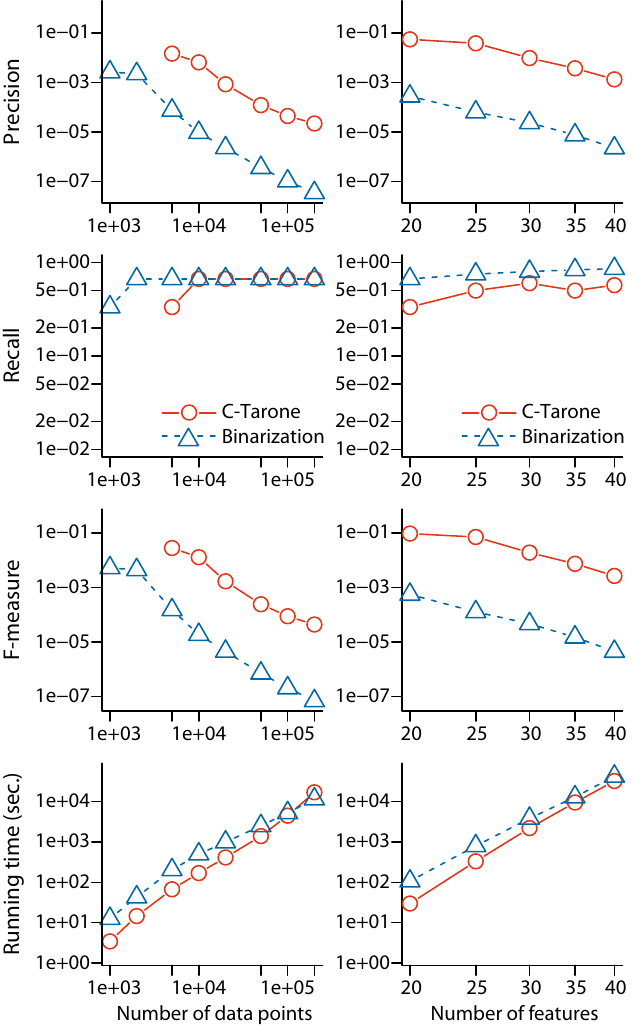}
 \caption{Results on synthetic data with the minor class ratio $r_1 = 0.2$.
 The number of features is $d = 20$ in the left column and the sample size is $N = \text{3,000}$ in the right column.
 Both x- and y-axes are in logarithmic scale. C-Tarone is shown in red circles, the binarization approach in blue triangles.
 Missing points in $(\textbf{b})$ mean that no significant combination is detected.}
 \label{figure:synth_02}
\end{figure}

\begin{table}
\centering
\caption{Statistics of real data.}
\label{table:real_stat}
\begin{tabular}[t]{lrrr}
 \toprule
 Data & $N$ & $d$ & \# candidate combinations\\
 &&& (search space)\\
 \midrule
 ctg & 942 & 22 & 4,194,304\\
 faults & 316 & 27 & 134,217,728\\
 ijcnn & 9,706 & 22 & 4,194,304\\
 magic & 13,376 & 10 & 1,024\\
 segment & 660 & 19 & 524,288\\
 transfusion & 356 & 4 & 16\\
 waveform & 3,314 & 21 & 2,097,152\\
 wdbc & 424 & 30 & 1,073,741,824\\
 wine & 3,198 & 11 & 2,048\\
 yacht & 308 & 6 & 64\\
 \bottomrule
\end{tabular}
\end{table}

\end{document}